\ifcvprfinal\pagestyle{empty}\fi
\newcommand{\cS}{\mathcal{S}}
\newcommand{\bv}{\mathbf{v}}
\newcommand{\bx}{\mathbf{x}}
\newcommand{\bD}{\mathbf{D}}
\newcommand{\bS}{\mathbf{S}}
\newcommand{\bmX}{\bm X}
\newcommand{\bzero}{\mathbf 0}
\newcommand{\bL}{{\mathbf{L}}}
\newcommand{\bW}{{\mathbf{W}}}
\newcommand{\bY}{{\mathbf{Y}}}
\newcommand{\bZ}{{\mathbf{Z}}}
\newcommand{\balpha}{\bm{\alpha}}
\newcommand{\N}{{\rm I}\kern-0.18em{\rm N}}
\newcommand{\R}{{\rm I}\kern-0.18em{\rm R}}
\newcommand{\h}{{\rm I}\kern-0.18em{\rm H}}
\newcommand{\K}{{\rm I}\kern-0.18em{\rm K}}
\newcommand{\p}{{\rm I}\kern-0.18em{\rm P}}
\newcommand{\E}{{\rm I}\kern-0.18em{\rm E}}
\newcommand{\Z}{{\rm Z}\kern-0.18em{\rm Z}}
\newcommand{\1}{{\rm 1}\kern-0.25em{\rm I}}
\newcommand{\pn}{\p_{\kern-0.25em n}}
\newcommand{\pnm}{\p_{\kern-0.25em n,m}}
\newcommand{\psubm}{\p_{\kern-0.25em m}}
\newcommand{\BigO}[1]{{\operatorname{O}}}
\DeclareMathOperator*{\argmin}{arg\,min}
\newtheorem{MyTheorem}{Theorem}
\newtheorem{MyProposition}{Proposition}
\begin{document}

\title{Learning with $\ell^{0}$-Graph: $\ell^{0}$-Induced Sparse Subspace Clustering}

\author{}

\maketitle

\begin{abstract}
Sparse subspace clustering methods, such as Sparse Subspace Clustering (SSC) \cite{ElhamifarV13} and $\ell^{1}$-graph \cite{YanW09,ChengYYFH10}, are effective in partitioning the data that lie in a union of subspaces. Most of those methods use $\ell^{1}$-norm or $\ell^{2}$-norm with thresholding to impose the sparsity of the constructed sparse similarity graph, and certain assumptions, e.g. independence or disjointness, on the subspaces are required to obtain the subspace-sparse representation, which is the key to their success. Such assumptions are not guaranteed to hold in practice and they limit the application of sparse subspace clustering on subspaces with general location. In this paper, we propose a new sparse subspace clustering method named $\ell^{0}$-graph. In contrast to the required assumptions on subspaces for most existing sparse subspace clustering methods, it is proved that subspace-sparse representation can be obtained by $\ell^{0}$-graph for arbitrary distinct underlying subspaces almost surely under the mild i.i.d. assumption on the data generation. We develop a proximal method to obtain the sub-optimal solution to the optimization problem of $\ell^{0}$-graph with proved guarantee of convergence. Moreover, we propose a regularized $\ell^{0}$-graph that encourages nearby data to have similar neighbors so that the similarity graph is more aligned within each cluster and the graph connectivity issue is alleviated. Extensive experimental results on various data sets demonstrate the superiority of $\ell^{0}$-graph compared to other competing clustering methods, as well as the effectiveness of regularized $\ell^{0}$-graph.
\end{abstract}
\section{Introduction}
Clustering is a common unsupervised data analysis method which partitions data into a set of self-similar clusters. High dimensionality of data often imposes difficulty on clustering. For example, model-based clustering methods, such as Gaussian Mixture Model (GMM) that models the data by a mixture of parametric distributions, suffer from the curse of dimensionality when fitting a statistical model to the data \cite{Fraley02}.


Based on the observation that high dimensional data often lie in a set of low-dimensional subspaces in many practical scenarios, subspace clustering algorithms \cite{Vidal11} aim to partition the data such that data belonging to the same subspace are identified as one cluster. Among various subspace clustering algorithms, the ones that employ sparsity prior, such as Sparse Subspace Clustering (SSC) \cite{ElhamifarV13} and $\ell^{1}$-graph \cite{YanW09,ChengYYFH10}, have been proven to be effective in separating the data in accordance with the subspaces that the data lie in under certain assumptions.

Sparse subspace clustering methods construct sparse similarity graph by sparse representation of the data, where the vertices represent the data, and an edge is between two vertices whenever one participates the spare representation of the other. Thanks to the subspace-sparse representation, the nonzero elements in the sparse representation of each datum in a subspace correspond to the data points in the same subspace, so that vertices corresponding to different subspaces are disconnected in the sparse similarity graph, leading to their compelling performance with spectral clustering \cite{Ng01} applied on such graph.

\cite{ElhamifarV13} proves that when the subspaces are independent or disjoint, then subspace-sparse representations can be obtained by solving the canonical sparse coding problem using data as the dictionary under certain conditions on the rank, or singular value of the data matrix and the principle angle between the subspaces respectively. Under the independence assumption on the subspaces, low rank representation \cite{LiuLY10,Liu12} is also proposed to recover the subspace structures. Relaxing the assumptions on the subspaces to allowing overlapping subspaces, the Greedy Subspace Clustering \cite{ParkCS14} and the Low-Rank Sparse Subspace Clustering \cite{Wang13} achieve subspace-sparse representation with high probability. However, their results rely on the semi-random model which assumes the data in each subspace are generated i.i.d. uniformly on the unit sphere in that subspace as well as certain additional conditions on the size and dimensionality of the data. In addition, the geometric analysis in \cite{Soltanolkotabi2012} also adopts the semi-random model and it handles overlapping subspaces.

To avoid the non-convex optimization problem incurred by $\ell^{0}$-norm, most of the sparse subspace clustering or sparse graph based clustering methods use $\ell^{1}$-norm \cite{YanW09,ChengYYFH10,ElhamifarV11,ElhamifarV13,YYZRl1graphBMVC2014} or $\ell^{2}$-norm with thresholding \cite{Peng2015robust} to impose the sparsity on the constructed similarity graph. In addition, $\ell^{1}$-norm has been widely used as a convex relaxation of $\ell^{0}$-norm for efficient sparse coding algorithms \cite{jenatton2010proximal,Mairal2010,MairalBPSZ08}. On the other hand, sparse representation methods such as \cite{Mancera2006} that directly optimize objective function involving $\ell^{0}$-norm demonstrate compelling performance compared to its $\ell^{1}$-norm counterpart. It remains an interesting question whether sparse subspace clustering equipped with $\ell^{0}$-norm, which is the origination of the sparsity that counts the number of nonzero elemens, has advantage in obtaining the subspace-sparse representation. In this paper, we propose $\ell^{0}$-graph which employs $\ell^{0}$-norm to enforce the sparsity of the similarity graph. This paper offers three contributions:
\begin{itemize}
  \item[] \textbf{Theoretical Results on $\ell^{0}$-Induced Almost Surely Subspace-Sparse Representation} We present the theory of the $\ell^{0}$-induced sparse subspace clustering by $\ell^{0}$-graph, which shows that $\ell^{0}$-graph renders subspace-sparse representation almost surely under minimum assumptions on the underlying subspaces the data lie in, i.e. subspaces are distinct. To the best of our knowledge, this is the mildest assumption on the subspaces compared to most existing sparse subspace clustering methods. Furthermore, our theory assumes that the data in each subspace are generated i.i.d. from arbitrary continuous distribution supported on that subspace, which is milder than the assumption of semi-random model in \cite{ParkCS14} and \cite{Wang13} that assume the data are i.i.d. uniformly distributed on the unit sphere in each subspace.
  \item[] \textbf{Efficient Optimization} The optimization problem of $\ell^{0}$-graph is NP-hard and it is impractical to pursue the global optimal solution. Instead, we develop an efficient proximal method to obtain a sub-optimal solution with convergence guarantee.
  \item[] \textbf{Regularized $\ell^{0}$-Graph} In order to obtain a sparse similarity graph where neighboring data have similar neighbors so as to encourage the graph connectivity within each cluster, we propose Regularized $\ell^{0}$-graph that incorporates an regularization term into the objective of $\ell^{0}$-graph. Moreover, we have implemented both $\ell^{0}$-graph and regularized $\ell^{0}$-graph in CUDA C programming language for significant speedup by parallel computing.
\end{itemize}
Note that SSC-OMP \cite{Dyer13a} adopts Orthogonal Matching Pursuit (OMP) \cite{Tropp04} to choose neighbors for each datum in the sparse similarity graph, which can be interpreted as approximately solving a $\ell^{0}$ problem. However, SSC-OMP does not present the theoretical properties of the $\ell^{0}$-induced sparse subspace clustering, and the experimental results show the significant performance advantage of $\ell^{0}$-graph over the OMP-graph. OMP-graph solves the $\ell^{0}$ problem of $\ell^{0}$-graph by OMP, so that it is equivalent to SSC-OMP for clustering. Although our optimization algorithm only obtains a sub-optimal solution to the objective of $\ell^{0}$-graph, we give theory about $\ell^{0}$-induced subspace structures and extensive experimental results show the effectiveness of our model.

The remaining parts of the paper are organized as follows. The representative subspace subspace clustering methods, SSC and $\ell^{1}$-graph, are introduced in the next subsection, and then the detailed formulation of $\ell^{0}$-graph and regularized $\ell^{0}$-graph is illustrated. We then show the clustering performance of the proposed models, and conclude the paper. We use bold letters for matrices and vectors, and regular lower letter for scalars throughout this paper. The bold letter with superscript indicates the corresponding column of a matrix, and the bold letter with subscript indicates the corresponding element of a matrix or vector. $\|\cdot\|_F$ and $\|\cdot\|_p$ denote the Frobenius norm and the $\ell^{p}$-norm, and ${\rm diag}(\cdot)$ indicates the diagonal elements of a matrix.

\subsection{Sparse Subspace Clustering and $\ell^{1}$-Graph}
Sparse coding methods represent an input signal by a linear combination of only a few atoms of a dictionary, and the sparse coefficients are named sparse code. Sparse coding has been broadly applied in machine learning and signal processing, and sparse code is extensively used as a discriminative and robust feature representation \cite{YangYGH09,ChengSRL2013,ZhangGLXA13,YYZRl1graphBMVC2014}

SSC \cite{ElhamifarV13} and $\ell^{1}$-graph \cite{YanW09,ChengYYFH10} employ sparse representation of the data to construct the sparse similarity graph. With the data ${\bm X}=[ {{\bx_1},\ldots ,{\bx_n}} ] \in {\R^{d \times n}}$ where $n$ is the size of the data and $d$ is the dimensionality, SSC and $\ell^{1}$-graph solves the following sparse coding problem:
\begin{small}\begin{align}\label{eq:ssc-l1}
\mathop {\min }\limits_{{\balpha}} {\| {{\balpha}} \|_1}\quad s.t.\;{\bmX} = {{\bmX}}{\balpha},\,\, {\rm diag}(\balpha) = \bzero
\end{align}\end{small}
Both SSC and $\ell^{1}$-graph construct a sparse similarity graph $G = ( {{\bm X},{\mathbf W}} )$ where the data ${\bmX}$ are represented as vertices, $\bW$ is the graph weight matrix of size $n \times n$ and $\bW_{ij}$ indicates the similarity between $\bx_i$ and $\bx_j$, $\bW$ is set by the sparse codes $\balpha$ as below:
\begin{small}\begin{align}\label{eq:W}
{\bW_{ij}}=({|{\balpha_{ij}}|+|{\balpha_{ji}}|})/{2} \quad 1 \le i,j \le n
\end{align}\end{small}
Furthermore, suppose the underlying subspaces that the data lie in are independent or disjoint, SSC \cite{ElhamifarV13} proves that the optimal solution to (\ref{eq:ssc-l1}) is the subspace-sparse representation under several additional conditions. \textit{The sparse representation $\balpha$ is called subspace-sparse representation if the nonzero elements of $\balpha^i$, namely the sparse representation of the datum $\bx_i$,  correspond to the data points in the same subspace as $\bx_i$}. Therefore, vertices corresponding to different subspaces are disconnected in the sparse similarity graph. With the subsequent spectral clustering \cite{Ng01} applied on such sparse similarity graph, compelling clustering performance is achieved.

Allowing some tolerance for inexact representation, the literature often turns to solve the following problem for SSC and $\ell^{1}$-graph:
\begin{small}\begin{align*}
\mathop {\min }\limits_{{\balpha}} {\| {{\balpha}} \|_1}\quad s.t.\;\|{\bmX} - {{\bmX}}{\balpha}\|_F \le \delta,\,\, {\rm diag}(\balpha) = \bzero
\end{align*}\end{small}
which is equivalent to the following problem
\begin{small}\begin{align}\label{eq:ssc-l1-lasso}
\mathop {\min }\limits_{{\balpha}} {\|\bmX - \bmX \balpha\|_F^2 + {\lambda_{\ell^{1}}}\|{\balpha}\|_1} \quad s.t. \,\, {\rm diag}(\balpha) = \bzero
\end{align}\end{small}
where ${\lambda_{\ell^{1}}}>0$ is a weighting parameter for the $\ell^{1}$ term.

\begin{table*}[ht]
\centering
\footnotesize
\caption{Assumptions on the subspaces and random data generation (for randomized part of the algorithm) for different sparse subspace clustering methods. Note that $S_1 < S_2 < S_3 < S_4$, $D_1 < D_2$, and the assumption on the right hand side of $<$ is milder than that on the left hand side.}
\begin{tabular}{|c|c|c|c|c|c|c|c|c|c|c|}
  \hline
  Assumption on Subspaces                           &Explanation            \\\hline
  $S_1$:Independent Subspaces (\cite{LiuLY10,Liu12}) &  ${\rm Dim} [\cS_1 \otimes \cS_2 \ldots \cS_K] = \sum\limits_k {\rm Dim}[\cS_k]$                        \\ \hline
  $S_2$:Disjoint Subspaces (\cite{ElhamifarV13})            & $\cS_k \cap \cS_{k'} = \bzero$ for $k \neq k'$                             \\ \hline
  $S_3$:Overlapping Subspaces  (\cite{ParkCS14,Wang13,Soltanolkotabi2012})        &  ${\rm Dim} [\cS_k \cap \cS_{k'}] < \min\{{\rm Dim}[\cS_k],{\rm Dim}[\cS_{k'}]\}$ for $k \neq k'$   \\ \hline
  $S_4$:Distinct Subspaces  ($\ell^{0}$-Graph)          &$\cS_k \neq \cS_{k'}$ for $k \neq k'$                        \\ \hline\hline
  Assumption on Random Data Generation  &Explanation \\\hline
  $D_1$:Semi-Random Model   (\cite{ParkCS14,Wang13,Soltanolkotabi2012})           &The data in each subspace are generated i.i.d. uniformly on the unit sphere in that subspace. \\ \hline
  $D_2$:IID      ($\ell^{0}$-Graph)                     &The data in each subspace are generated i.i.d. from arbitrary continuous distribution supported on that subspace. \\ \hline
\end{tabular}
\label{table:assumptions}
\end{table*}

\section{$\ell^{0}$-Induced Sparse Subspace Clustering}
In this paper, we investigate $\ell^{0}$-induced sparse subspace clustering method, which solves the following $\ell^{0}$ problem:
\begin{small}\begin{align}\label{eq:ssc-l0}
\mathop {\min }\limits_{{\balpha}} {\| {{\balpha}} \|_0}\quad s.t.\;{\bmX} = {{\bmX}}{\balpha},\,\, {\rm diag}(\balpha) = \bzero
\end{align}\end{small}
We then give the theorem about $\ell^{0}$-induced almost surely subspace-sparse representation, and the proof is presented in the supplementary document for this paper.
\begin{MyTheorem}\label{theorem::l0-ssc}
(\textit{$\ell^{0}$-Induced Almost Surely Subspace-Sparse Representation})
Suppose the data ${\bm X}=[ {{\bx_1},\ldots ,{\bx_n}} ] \in {\R^{d \times n}}$ lie in a union of $K$ distinct subspaces $\{\cS_k\}_{k=1}^K$ of dimensions $\{d_k\}_{k=1}^K$, i.e. $\cS_k \neq \cS_{k'}$ for $k \neq k'$. Let $\bmX^{(k)} \in \R^{d \times n_k}$ denotes the data that belong to subspace $\cS_k$, and $\sum\limits_{k=1}^K n_k = n$. When $n_k \ge d_k+1$, if the data belonging to each subspace are generated i.i.d. from some unknown distribution supported on that subspace, then with probability $1$, the optimal solution to (\ref{eq:ssc-l0}), denoted by $\balpha^*$, is a subspace-sparse representation, i.e. nonzero elements in ${\balpha^*}^i$ corresponds to the data that lie in the same subspace as $\bx_i$.
\end{MyTheorem}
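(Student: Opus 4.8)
The plan is to exploit the column-wise separability of~(\ref{eq:ssc-l0}) and then argue, one column at a time, by a general-position argument valid with probability one. Concretely, the constraint $\bmX=\bmX\balpha$ with $\mathrm{diag}(\balpha)=\bzero$ decouples across columns: writing $\balpha^i$ for the $i$-th column, (\ref{eq:ssc-l0}) is equivalent to independently solving, for each $i$, $\min\|\balpha^i\|_0$ subject to $\bx_i=\bmX\balpha^i$ and $\balpha^i_i=0$; and since $\|\balpha\|_0=\sum_i\|\balpha^i\|_0$, a matrix is optimal for~(\ref{eq:ssc-l0}) iff each of its columns is optimal for its subproblem. So it suffices to prove that, for each fixed $i$, almost surely every optimal $i$-th column is supported on the indices of the subspace containing $\bx_i$, and then take a union bound over the $n$ columns.

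The heart of the argument is to establish a general-position event of probability one. For each $k$ let $I_k$ index the $n_k\ge d_k+1$ data points in $\cS_k$. \emph{(i)} Any $d_k$ of the points $\{\bx_j:j\in I_k\}$ are linearly independent and hence form a basis of $\cS_k$, because $d_k$ i.i.d.\ draws from a continuous distribution on the $d_k$-dimensional space $\cS_k$ are a.s.\ linearly independent and there are finitely many $d_k$-subsets. \emph{(ii)} For every $i$ with $\bx_i\in\cS_k$ and every $J\subseteq\{1,\dots,n\}\setminus\{i\}$ with $|J|\le d_k$: if $\mathrm{span}\{\bx_j:j\in J\}\ne\cS_k$ then $\bx_i\notin\mathrm{span}\{\bx_j:j\in J\}$. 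To prove~(ii), condition on all data except $\bx_i$; then $W_J:=\mathrm{span}\{\bx_j:j\in J\}$ is a fixed subspace of dimension at most $|J|\le d_k$, so $W_J\ne\cS_k$ forces $W_J\cap\cS_k$ to be a proper subspace of $\cS_k$. Since $\bx_i$ is independent of the conditioning and its law on $\cS_k$ is continuous, hence assigns zero mass to proper subspaces, $\Pr[\bx_i\in W_J]=0$; a union bound over the finitely many pairs $(i,J)$ gives probability one.

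On this event, fix $i$ with $\bx_i\in\cS_k$ and let $\balpha^*$ be any optimal solution. If $|J|<d_k$ then $\dim\mathrm{span}\{\bx_j:j\in J\}<d_k=\dim\cS_k$, so by~(ii) $\bx_i$ lies in no such span; hence $\|{\balpha^*}^i\|_0\ge d_k$. Conversely, by~(i) the $n_k-1\ge d_k$ points of $\cS_k$ other than $\bx_i$ contain a basis of $\cS_k\ni\bx_i$, giving a feasible column with $\ell^0$-norm $d_k$; thus $\|{\balpha^*}^i\|_0=d_k$. Let $J=\mathrm{supp}({\balpha^*}^i)$, so $|J|=d_k$. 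In a minimum-$\ell^0$ representation the vectors $\{\bx_j:j\in J\}$ must be linearly independent --- otherwise a nonzero linear dependence among them, scaled suitably, could be subtracted to cancel one coefficient, producing a strictly shorter feasible representation (still with zero $i$-th entry, since $i\notin J$), a contradiction. Hence $\mathrm{span}\{\bx_j:j\in J\}$ is $d_k$-dimensional and contains $\bx_i$, so by~(ii) it equals $\cS_k$, whence $\bx_j\in\cS_k$ for every $j\in J$. Thus ${\balpha^*}^i$ is subspace-sparse, and varying $i$ completes the proof.

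I expect step~(ii) to be the main obstacle: one must argue cleanly that conditioning on the other $n-1$ points leaves $\bx_i$ genuinely free within $\cS_k$ --- this is exactly where the i.i.d.\ assumption, rather than merely a joint continuous law, is convenient --- and that the span in question either coincides with $\cS_k$ or meets it in a lower-dimensional, hence null, set. The condition $n_k\ge d_k+1$ is used twice, for the lower bound $\|{\balpha^*}^i\|_0\ge d_k$ and for its attainability inside $\cS_k$, while the distinctness hypothesis $\cS_k\ne\cS_{k'}$ is what makes the conclusion ``$\mathrm{span}\{\bx_j:j\in J\}=\cS_k$'' tantamount to the statement that every participating column belongs to the cluster of $\bx_i$.
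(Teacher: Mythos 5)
Your proof is correct: the column-wise decoupling, the probability-one general-position events (any $d_k$ points of $\cS_k$ are independent; $\bx_i$ a.s.\ avoids every span of at most $d_k$ other points that is not equal to $\cS_k$), the resulting lower bound $\|{\balpha^*}^i\|_0\ge d_k$ with matching upper bound from $n_k\ge d_k+1$, and the final identification of the span of the support with $\cS_k$ together constitute exactly the dimension-counting, measure-zero argument that the paper's (supplementary) proof uses. No gaps; the only delicate points --- conditioning on the other samples to exploit independence, and the cancellation step showing the optimal support is linearly independent --- are both handled correctly.
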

Based on the above theorem, we propose $\ell^{0}$-graph that solves (\ref{eq:ssc-l0}) and uses the sparse representation to build the sparse similarity graph for clustering. According to Theorem~\ref{theorem::l0-ssc}, $\ell^{0}$-induced sparse subspace clustering method (\ref{eq:ssc-l0}) obtains the subspace-sparse representation almost surely under minimum assumption on the subspaces, i.e. it only requires that the subspaces be distinct. To the best of our knowledge, this is the mildest assumption on the subspaces for most existing sparse subspace clustering methods. Moreover, the only assumption on the data generation is that the data in each subspace are i.i.d. random samples from arbitrary continuous distributions supported on that subspace. In the light of assumed data distribution, such assumption on the data generation is much milder than the assumption of the semi-random model in (\cite{ParkCS14,Wang13,Soltanolkotabi2012}) (note that the data can always be normalized to have unit norm and reside on the unit sphere).  Table~\ref{table:assumptions} summarizes different assumptions on the subspaces and random data generation for different sparse subspace clustering methods. It can be seen that $\ell^{0}$-graph has mildest assumption on both subspaces and the random data generation.

\section{Optimization of $\ell^{0}$-Graph}
We introduce the optimization algorithm for $\ell^{0}$-graph in this section. Similar to the case of SSC and $\ell^{1}$-graph, by allowing tolerance for inexact representation, we turn to optimize the following $\ell^{0}$ problem
\begin{small}\begin{align}\label{eq:l0graph}
\mathop {\min }\limits_{{\balpha}} L(\balpha) = {\|\bmX - \bmX \balpha\|_F^2 + {\lambda}\|{\balpha}\|_0} \quad s.t. \,\, {\rm diag} = \bzero
\end{align}\end{small}
Problem (\ref{eq:l0graph}) is NP-hard, and it is impractical to seek for its global optimal solution. The literature extensively resorts to approximate algorithms, such as Orthogonal Matching Pursuit \cite{Tropp04}, or that uses surrogate functions \cite{Hyder09}, for $\ell^{0}$ problems. Inspired by recent advances in solving non-convex optimization problems by proximal linearized method \cite{BoltePAL2014} and the application of this method to $\ell^{0}$-norm based dictionary learning \cite{BaoJQS14}, we propose an iterative proximal method to optimize (\ref{eq:l0graph}) and obtain a sub-optimal solution with proved convergence guarantee. In the following text, the superscript with bracket indicates the iteration number of the proposed proximal method.

In $t$-th iteration of our proximal method for $t \ge 1$, gradient descent is performed on the squared loss term of (\ref{eq:l0graph}), i.e. $Q(\balpha) = \|\bmX - \bmX \balpha\|_F^2$, to obtain
\begin{small}\begin{align}\label{eq:l0graph-proximal-step1}
\tilde {\balpha}^{(t)} = {\balpha}^{(t-1)} - \frac{2}{{\tau}s} ({\bmX^\top}{\bmX}{\balpha^{(t-1)}}-{\bmX^\top}{\bmX})
\end{align}\end{small}
where $\tau$ is any constant that is greater than $1$, and $s$ is the Lipschitz constant for the gradient of function $Q(\cdot)$, namely
\begin{small}\begin{align}\label{eq:lipschitz-L}
\|\nabla Q(\bY) - \nabla Q(\bZ)\|_F \le s \|\bY-\bZ\|_F, \,\, \forall \, \bY,\bZ \in \R^{n \times n}
\end{align}\end{small}
Then ${\balpha}^{(t)}$ is the solution to the following $\ell^{0}$ regularized problem:
\begin{small}\begin{align}\label{eq:l0graph-subprob}
&{\balpha}^{(t)} = \argmin \limits_{\bv \in \R^{n \times n}} {\frac{{\tau} s}{2}\|\bv - {\tilde {\balpha}^{(t)}}\|_F^2 + {\lambda}\|\bv\|_0} \\ &\quad s.t. \,\, {\rm diag} (\bv) = \bzero \nonumber
\end{align}\end{small}
It can be verified that (\ref{eq:l0graph-subprob}) has closed-form solution, i.e.
\begin{small}\begin{align}\label{eq:l0graph-proximal-step2}
&{\balpha}_{ij}^{(t)} =
\left\{
\begin{array}
    {r@{\quad:\quad}l}
    0 & {|{\tilde {\balpha}_{ij}^{(t)}}| < \sqrt{\frac{2\lambda}{{\tau}s}} \,\, {\rm or } \,\, i = j   } \\
    {\tilde {\balpha}_{ij}^{(t)}} & {\rm otherwise}
\end{array}
\right.
\end{align}\end{small}
\noindent for $1 \le i,j \le n$. The iterations start from $t=1$ and continue until the sequence $\{L({\balpha}^{(t)})\}$ converges or maximum iteration number is achieved. We initialize $\balpha$ as ${\balpha}^{(0)} = \balpha_{\ell^{1}}$ and $\balpha_{\ell^{1}}$ is the sparse codes generated by SSC or $\ell^{1}$-graph via solving (\ref{eq:ssc-l1-lasso}) with some proper weighting parameter $\lambda_{\ell^{1}}$. In all the experimental results of this paper, we empirically set $\lambda_{\ell^{1}} = 0.1$ when initializing $\ell^{0}$-graph.

The data clustering algorithm by $\ell^{0}$-graph is described in Algorithm~\ref{alg:l0graph}. Also, the following theorem shows that each iteration of the proposed proximal method decreases the value of the objective function $L(\cdot)$ in (\ref{eq:l0graph}), therefore, our proximal method always converges.
\begin{MyTheorem}\label{theorem::sufficient-decrease}
Let $s = 2 \sigma_{\max}({\bmX^\top}{\bmX})$ where $\sigma_{\max}(\cdot)$ indicates the largest eigenvalue of a matrix, then the sequence $\{L(\balpha^{(t)})\}$ generated by the proximal method with (\ref{eq:l0graph-proximal-step1}) and (\ref{eq:l0graph-proximal-step2}) decreases, and the following inequality holds for $t \ge 1$:
\begin{small}\begin{align}\label{eq:l0graph-proximal-sufficient-decrease}
&L(\balpha^{(t)}) \le L(\balpha^{(t-1)}) - \frac{(\tau-1)s}{2} \|{\balpha}^{(t)} - {\balpha}^{(t-1)}\|_F^2
\end{align}\end{small}
And it follows that the sequence $\{L(\balpha^{(t)})\}$ converges.
\end{MyTheorem}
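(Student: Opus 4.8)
The plan is to derive the sufficient-decrease inequality (\ref{eq:l0graph-proximal-sufficient-decrease}) by combining two ingredients — the descent lemma applied to the smooth quadratic $Q(\balpha)=\|\bmX-\bmX\balpha\|_F^2$, and the global optimality of $\balpha^{(t)}$ for the $\ell^{0}$-regularized subproblem (\ref{eq:l0graph-subprob}) — and then deduce convergence of $\{L(\balpha^{(t)})\}$ from monotonicity together with boundedness below.

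First I would check that $s=2\sigma_{\max}(\bmX^\top\bmX)$ is a valid Lipschitz constant in (\ref{eq:lipschitz-L}). Since $\nabla Q(\balpha)=2\bmX^\top\bmX(\balpha-\bI)$, we get $\nabla Q(\bY)-\nabla Q(\bZ)=2\bmX^\top\bmX(\bY-\bZ)$, hence $\|\nabla Q(\bY)-\nabla Q(\bZ)\|_F\le 2\sigma_{\max}(\bmX^\top\bmX)\|\bY-\bZ\|_F=s\|\bY-\bZ\|_F$, using that the largest eigenvalue of the symmetric positive semidefinite matrix $\bmX^\top\bmX$ equals its operator norm. This also confirms that (\ref{eq:l0graph-proximal-step1}) is exactly the gradient step $\tilde{\balpha}^{(t)}=\balpha^{(t-1)}-\frac{1}{\tau s}\nabla Q(\balpha^{(t-1)})$. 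With the Lipschitz bound in hand, the standard descent lemma gives $Q(\bY)\le Q(\bZ)+\langle\nabla Q(\bZ),\bY-\bZ\rangle+\frac{s}{2}\|\bY-\bZ\|_F^2$ for all $\bY,\bZ$, where $\langle\cdot,\cdot\rangle$ is the Frobenius inner product; taking $\bY=\balpha^{(t)}$ and $\bZ=\balpha^{(t-1)}$ produces an upper bound on $Q(\balpha^{(t)})$ involving $\langle\nabla Q(\balpha^{(t-1)}),\balpha^{(t)}-\balpha^{(t-1)}\rangle$ and $\frac{s}{2}\|\balpha^{(t)}-\balpha^{(t-1)}\|_F^2$.

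Next I would use that $\balpha^{(t)}$ is a global minimizer of the constrained subproblem (\ref{eq:l0graph-subprob}), which the closed form (\ref{eq:l0graph-proximal-step2}) attains. Every iterate has zero diagonal by induction (the base case is $\balpha^{(0)}=\balpha_{\ell^{1}}$, feasible for (\ref{eq:ssc-l1-lasso}), and each subsequent $\balpha^{(t)}$ has zero diagonal by (\ref{eq:l0graph-proximal-step2})), so $\balpha^{(t-1)}$ is a feasible competitor and optimality yields $\frac{\tau s}{2}\|\balpha^{(t)}-\tilde{\balpha}^{(t)}\|_F^2+\lambda\|\balpha^{(t)}\|_0\le\frac{\tau s}{2}\|\balpha^{(t-1)}-\tilde{\balpha}^{(t)}\|_F^2+\lambda\|\balpha^{(t-1)}\|_0$. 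Substituting $\tilde{\balpha}^{(t)}=\balpha^{(t-1)}-\frac{1}{\tau s}\nabla Q(\balpha^{(t-1)})$ and expanding both squared norms, the $\frac{1}{2\tau s}\|\nabla Q(\balpha^{(t-1)})\|_F^2$ contributions cancel, leaving $\langle\nabla Q(\balpha^{(t-1)}),\balpha^{(t)}-\balpha^{(t-1)}\rangle\le-\frac{\tau s}{2}\|\balpha^{(t)}-\balpha^{(t-1)}\|_F^2+\lambda\big(\|\balpha^{(t-1)}\|_0-\|\balpha^{(t)}\|_0\big)$. Plugging this into the descent-lemma bound, the quadratic terms combine as $\frac{s}{2}-\frac{\tau s}{2}=-\frac{(\tau-1)s}{2}$, and moving the $\ell^{0}$ terms to the left reassembles $L=Q+\lambda\|\cdot\|_0$ on both sides, giving precisely (\ref{eq:l0graph-proximal-sufficient-decrease}). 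Since $\tau>1$ the right-hand side is at most $L(\balpha^{(t-1)})$, so $\{L(\balpha^{(t)})\}$ is non-increasing; it is bounded below by $0$ because both terms of $L$ are nonnegative, hence it converges by monotone convergence.

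I do not anticipate a genuine obstacle — the argument is essentially bookkeeping around two inequalities. The one place requiring care is the interplay between the two step-size constants: the descent lemma must be invoked with the true Lipschitz constant $s$, whereas the proximal step is run with the inflated constant $\tau s$, and one must verify that the cross terms and the $\|\nabla Q\|_F^2$ terms cancel exactly so that $\tau$ survives only in the final coefficient $\frac{(\tau-1)s}{2}$. A secondary point to state explicitly is the feasibility-by-induction observation, since it is what licenses using $\balpha^{(t-1)}$ as a competitor in the constrained subproblem, together with the fact that (\ref{eq:l0graph-proximal-step2}) indeed realizes a global minimum of (\ref{eq:l0graph-subprob}).
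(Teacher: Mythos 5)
Your proposal is correct and follows the same route as the paper's proof: the standard sufficient-decrease argument for proximal linearized methods (as in Bolte et al.), combining the descent lemma for $Q$ with Lipschitz constant $s=2\sigma_{\max}(\bmX^\top\bmX)$ and the global optimality of $\balpha^{(t)}$ in the separable subproblem (\ref{eq:l0graph-subprob}) against the feasible competitor $\balpha^{(t-1)}$, with the cancellation of the $\|\nabla Q\|_F^2$ terms leaving exactly the coefficient $\frac{(\tau-1)s}{2}$. The cancellation you flag does work out, and the feasibility-by-induction and the verification that (\ref{eq:l0graph-proximal-step2}) globally minimizes (\ref{eq:l0graph-subprob}) (by entrywise separability of the hard-thresholding problem) are the right details to make explicit.
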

Furthermore, we show that if the sequence $\{\balpha^{(t)}\}$ generated by the proposed proximal method is bounded, then it is a Cauchy sequence and it converges to a critical point of the objective function $L$ in (\ref{eq:l0graph}).
\begin{MyTheorem}\label{theorem::converge-to-critical-point}
Suppose that the sequence $\{\balpha^{(t)}\}$ generated by the proximal method with (\ref{eq:l0graph-proximal-step1}) and (\ref{eq:l0graph-proximal-step2}) is bounded, then
1) $\sum\limits_{t=1}^{\infty} \|\balpha^{(t)} - \balpha^{(t-1)}\|_F < \infty$
2) $\{\balpha^{(t)}\}$ converges to a critical point \footnote{$x$ is a critical point of function $f$ if $0 \in \partial f (x)$, where $\partial f(x)$ is the limiting-subdifferential of $f$ at $x$. Please refer to more detailed definition in \cite{BoltePAL2014}. } of the function $L(\cdot)$ in (\ref{eq:l0graph}).
\end{MyTheorem}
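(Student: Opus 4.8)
The plan is to follow the now-standard convergence analysis of proximal forward--backward schemes for nonconvex, nonsmooth objectives built on the Kurdyka--{\L}ojasiewicz (KL) inequality, as in \cite{BoltePAL2014} and its $\ell^{0}$-dictionary-learning specialization \cite{BaoJQS14}. First I would rewrite the constrained objective of (\ref{eq:l0graph}) as an unconstrained sum $L(\balpha)=Q(\balpha)+R(\balpha)$, where $Q(\balpha)=\|\bmX-\bmX\balpha\|_F^2$ is $C^1$ with $s$-Lipschitz gradient (cf.\ (\ref{eq:lipschitz-L}) and Theorem~\ref{theorem::sufficient-decrease}) and $R(\balpha)=\lambda\|\balpha\|_0+\iota_{\mathcal{C}}(\balpha)$ is proper and lower semicontinuous, with $\mathcal{C}=\{\balpha:{\rm diag}(\balpha)=\bzero\}$ a linear subspace. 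Since $Q$ is polynomial, $\|\cdot\|_0$ is piecewise constant on semialgebraic pieces, and $\iota_{\mathcal{C}}$ is the indicator of a subspace, $L$ is semialgebraic, hence satisfies the KL property at every point with some concave desingularizing function $\varphi$. From Theorem~\ref{theorem::sufficient-decrease} I already have that $\{L(\balpha^{(t)})\}$ is nonincreasing and, being bounded below by $0$, converges to some $L^{\star}$, and that $\sum_{t}\|\balpha^{(t)}-\balpha^{(t-1)}\|_F^2<\infty$, so in particular $\|\balpha^{(t)}-\balpha^{(t-1)}\|_F\to 0$.

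Next I would establish the \emph{relative-error} estimate: there is a constant $c>0$ and a $w^{(t)}\in\partial L(\balpha^{(t)})$ with $\|w^{(t)}\|_F\le c\,\|\balpha^{(t)}-\balpha^{(t-1)}\|_F$. This follows from the first-order optimality condition of the proximal subproblem (\ref{eq:l0graph-subprob}), $0\in\tau s(\balpha^{(t)}-\tilde\balpha^{(t)})+\partial R(\balpha^{(t)})$, substituting $\tilde\balpha^{(t)}=\balpha^{(t-1)}-\tfrac{1}{\tau s}\nabla Q(\balpha^{(t-1)})$ from (\ref{eq:l0graph-proximal-step1}) and adding $\nabla Q(\balpha^{(t)})$ to both sides, which gives $\nabla Q(\balpha^{(t)})-\nabla Q(\balpha^{(t-1)})+\tau s(\balpha^{(t-1)}-\balpha^{(t)})\in\nabla Q(\balpha^{(t)})+\partial R(\balpha^{(t)})=\partial L(\balpha^{(t)})$; the $s$-Lipschitz continuity of $\nabla Q$ then yields $c=(1+\tau)s$. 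Using boundedness of $\{\balpha^{(t)}\}$ together with $\|\balpha^{(t)}-\balpha^{(t-1)}\|_F\to 0$, I would then show in the usual way that the accumulation-point set $\omega$ is nonempty and compact, that $L\equiv L^{\star}$ on $\omega$, and that $0\in\partial L(\bar\balpha)$ for every $\bar\balpha\in\omega$.

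The one place where the $\ell^{0}$ term needs care --- and the step I expect to be the main obstacle --- is the lack of continuity of $\|\cdot\|_0$, which is used both to get $L\equiv L^{\star}$ on $\omega$ and to apply the KL inequality along the sequence. I would resolve this with a \emph{support-stabilization} argument: the closed-form update (\ref{eq:l0graph-proximal-step2}) forces every nonzero entry of $\balpha^{(t)}$ to have magnitude at least $\sqrt{2\lambda/(\tau s)}$, so once $\|\balpha^{(t)}-\balpha^{(t-1)}\|_F<\sqrt{2\lambda/(\tau s)}$ --- which holds for all large $t$ --- the support ${\rm supp}(\balpha^{(t)})$ can no longer change. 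Hence there is a $t_0$ and a fixed index set $\Lambda$ with ${\rm supp}(\balpha^{(t)})=\Lambda$ for all $t\ge t_0$, so $\|\balpha^{(t)}\|_0\equiv|\Lambda|$ is constant and, restricted to the tail of the iterates, $L$ equals $Q$ plus a constant; on the subspace $\{\balpha:{\rm supp}(\balpha)\subseteq\Lambda\}$ the iteration (\ref{eq:l0graph-proximal-step1})--(\ref{eq:l0graph-proximal-step2}) reduces to plain gradient descent on the convex quadratic $Q$, which restores all the continuity needed in the previous paragraph.

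Finally I would run the standard KL finite-length argument on the tail $t\ge t_0$. If $L(\balpha^{(t_1)})=L^{\star}$ for some finite $t_1$, then Theorem~\ref{theorem::sufficient-decrease} forces $\balpha^{(t)}=\balpha^{(t_1)}$ for all $t\ge t_1$ and the conclusion is immediate; otherwise $L(\balpha^{(t)})>L^{\star}$ for all $t$, and the uniform KL inequality on a neighborhood of $\omega$ gives $\varphi'\big(L(\balpha^{(t)})-L^{\star}\big)\,\mathrm{dist}\big(0,\partial L(\balpha^{(t)})\big)\ge 1$. Combining this with the sufficient-decrease inequality (\ref{eq:l0graph-proximal-sufficient-decrease}), the relative-error bound, and concavity of $\varphi$ produces, in the classical fashion, a telescoping estimate of the form $\|\balpha^{(t+1)}-\balpha^{(t)}\|_F\le\tfrac{1}{2}\|\balpha^{(t)}-\balpha^{(t-1)}\|_F+c'\big(\varphi(L(\balpha^{(t)})-L^{\star})-\varphi(L(\balpha^{(t+1)})-L^{\star})\big)$; summing over $t$ proves $\sum_{t}\|\balpha^{(t)}-\balpha^{(t-1)}\|_F<\infty$, which is part 1). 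Finite length makes $\{\balpha^{(t)}\}$ a Cauchy sequence, hence convergent to some $\balpha^{\star}$, and since $\balpha^{\star}\in\omega$ it is a critical point of $L$, which is part 2).
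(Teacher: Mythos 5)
Your proposal is correct and takes essentially the same route as the paper, which simply notes that $L$ is semi-algebraic (hence satisfies the KL property) and invokes Theorem 1 of \cite{BoltePAL2014}; you have just unfolded that citation into its standard ingredients --- the sufficient-decrease inequality from Theorem~\ref{theorem::sufficient-decrease}, the relative-error subgradient bound with constant $(1+\tau)s$, and the uniformized KL finite-length argument. Your support-stabilization observation (every nonzero entry of an iterate has magnitude at least $\sqrt{2\lambda/(\tau s)}$, so the support freezes once $\|\balpha^{(t)}-\balpha^{(t-1)}\|_F$ drops below that threshold) is a valid and arguably cleaner substitute for the usual prox-comparison argument used in \cite{BoltePAL2014} to verify the continuity condition for the merely lower-semicontinuous $\ell^{0}$ term.
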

\begin{proof}[Sketch of the Proof]
\cite{BoltePAL2014} shows that the $\ell^{0}$-norm function $\|\cdot\|_0$ is a semi-algebraic function. The conclusions of this theorem directly follows from Theorem 1 in \cite{BoltePAL2014}.
\end{proof}
The detailed proofs of Theorem~\ref{theorem::sufficient-decrease} and Theorem~\ref{theorem::converge-to-critical-point} are included in the supplementary document.

\begin{algorithm}[h]
\renewcommand{\algorithmicrequire}{\textbf{Input:}}
\renewcommand\algorithmicensure {\textbf{Output:} }
\caption{Data Clustering by $\ell^{0}$-Graph}
\label{alg:l0graph}
\begin{algorithmic}[1]
\REQUIRE ~~\\
The data set ${\bmX}=\{\bx_i\}_{i=1}^{n}$, the number of clusters $c$, the parameter $\lambda$ for $\ell^{0}$-graph, $\lambda_{\ell^{1}}$ for the initialization of the the $\ell^{0}$-graph, maximum iteration number $M$, stopping threshold $\varepsilon$\\
\STATE $t=1$, initialize the coefficient matrix as ${\balpha}^{(0)} = \balpha_{\ell^{1}}$, $s = 2 \sigma_{\max}({\bmX^\top}{\bmX})$.

\WHILE{$t \le M$}
\STATE{Obtain ${\balpha}^{(t)}$ from ${\balpha}^{(t-1)}$ by (\ref{eq:l0graph-proximal-step1}) and (\ref{eq:l0graph-proximal-step2})}
\IF{$|L(\balpha^{(t)})-L(\balpha^{(t-1)})| < \varepsilon$}
\PRINT
\ELSE
\STATE{$t=t+1$.}
\ENDIF
\ENDWHILE

\STATE{Obtain the sub-optimal coefficient matrix ${\balpha^{*}}$ when the above iterations converge or maximum iteration number is achieved.}
\STATE{Build the sparse similarity matrix by symmetrizing $\balpha^{*}$: $\bW^{*} = \frac{|\balpha^{*}|+|\balpha^{*}|^\top}{2}$, compute the corresponding normalized graph Laplacian
$\bL^{*} = (\bD^{*})^{-\frac{1}{2}}(\bD^{*}-\bW^{*})(\bD^{*})^{-\frac{1}{2}}$, where $\bD^{*}$ is a diagonal matrix with $\bD_{ii}^{*} = \sum\limits_{j=1}^n {\bW_{ij}^{*}}$}
\STATE{Construct the matrix  $\bv = [\bv_1,\ldots,\bv_c] \in \R^{n \times c}$, where $\{\bv_1,\ldots,\bv_c\}$ are the $c$ eigenvectors of $\bL^{*}$ corresponding to its $c$ smallest eigenvalues. Treat each row of $\bv$ as a data point in $\R^c$, and run K-means clustering method to obtain the cluster labels for all the rows of $\bv$. }
\ENSURE The cluster label of $\bx_i$ is set as the cluster label of the $i$-th row of $\bv$, $1 \le i \le n$.
\end{algorithmic}
\end{algorithm}

\section{Regularized $\ell^{0}$-Graph}
While the subspace-sparse representation separates the data belonging to different subspaces in the constructed sparse similarity graph, it is not guaranteed that the data points in the same subspace form a connected component. This is the well known graph connectivity issue in the sparse subspace clustering literature \cite{ElhamifarV13,Nasihatkon11} which is the only gap that prevents a sparse similarity graph with subspace-sparse representation from forming the perfect clustering result, i.e. the data belonging to each subspace form a single connected component in the sparse similarity graph. SSC \cite{ElhamifarV13} suggests alleviating the graph connectivity issue by promoting common neighbors across the data in each subspace. In this section we propose Regularized $\ell^{0}$-Graph by adding a regularization term to (\ref{eq:l0graph}) which employs $\ell^{0}$-distance between the sparse representation of the data so as to impose the sparsity of the representation and encourage common neighbors for nearby data simultaneously. Regularized $\ell^{0}$-graph solves the following problem
\begin{small}
\begin{align}\label{eq:rl0graph}
&\mathop {\min }\limits_{\balpha} \|{\bmX} - {\bmX}\balpha\|_F^2 + \gamma  R_{\bS}(\balpha)
\end{align}
\end{small}
where $R_{\bS}(\balpha) = \sum\limits_{i,j=1}^n {\bS_{ij} \|\balpha^i - \balpha^j\|_0  }$ is the regularization term, $\bS$ is the adjacency matrix of the KNN graph and $\bS_{ij} = 1$ if and only if $\bx_i$ is among the $K$ nearest neighbors of $\bx_j$ in the sense of Euclidean distance. It should be emphasized that such KNN graph is a widely used strategy to identify nearby data for graph regularization in sparse coding \cite{Zheng11,YYZRl1graphBMVC2014}. $\gamma>0$ is the weighting parameter for the regularization term. Since the co-located elements of two sparse codes $\balpha^i$ and $\balpha^j$ are not exactly the same in most cases, their $\ell^{0}$-distance $\|\balpha^i - \balpha^j\|_0$ is almost always the sum of their difference in support and the number of their co-located nonzero elements, and the support of a vector is defined to be the indices of its nonzero elements. Therefore, the regularization term $R_{\bS}(\balpha)$ encourages both sparsity and common neighbors across nearby data.

We use coordinate descent to optimize (\ref{eq:rl0graph}) with respect to $\balpha^i$ in each step of the coordinate descent, with all the other sparse codes $\{\balpha^j\}_{j \neq i}$ fixed. The optimization problem for $\balpha^i$ in each step is presented below:
\begin{small}
\begin{align}\label{eq:rl0graph-cdi}
&\mathop {\min }\limits_{\balpha^i} F(\balpha^i) =  \|{\bx_i} - {\bmX}\balpha^{i}\|_2^2 + \gamma  R_{\tilde \bS}(\balpha^i)
\end{align}
\end{small}
where $R_{\tilde \bS}(\balpha^i) = \sum\limits_{j=1}^n {\tilde \bS_{ij} \|\balpha^i - \balpha^j\|_0  }$, where $\tilde \bS = \bS + \bS^\top$.

(\ref{eq:rl0graph-cdi}) can also be optimized by the proximal method in a similar manner to $\ell^{0}$-graph. In $t$-th ($t \ge 1$) iteration of our proximal method for the problem (\ref{eq:rl0graph-cdi}), gradient descent on the squared loss term of the objective function of (\ref{eq:rl0graph-cdi}) is performed by (\ref{eq:rl0-graph-proximal-step1}):
\begin{small}\begin{align}\label{eq:rl0-graph-proximal-step1}
\tilde {\balpha^i}^{(t)} = {\balpha^i}^{(t-1)} - \frac{2}{{\tau}s} ({\bmX^\top}{\bmX}{{\balpha^i}^{(t-1)}}-{\bmX^\top}{\bx_i})
\end{align}\end{small}
where $\tau$ and $s$ are the same as that in (\ref{eq:l0graph-proximal-step1}). Then ${\balpha^i}^{(t)}$ is obtained as the solution to the following $\ell^{0}$ regularized problem:
\begin{align}\label{eq:rl0graph-cdi-subprob}
&{\balpha^i}^{(t)} = \nonumber \\
&{\argmin}_{{\bv \in \R^{n}, \bv_{i} = 0}} {\frac{{\tau} s}{2}\|\bv - \tilde {\balpha^i}^{(t)}\|_2^2 + \gamma R_{\tilde \bS}(\bv)  }
\end{align}
Proposition~\ref{proposition::sol-to-subprob} below shows the closed form solution to the subproblem (\ref{eq:rl0graph-cdi-subprob}):
\begin{MyProposition}\label{proposition::sol-to-subprob}
Define $F_k(v) = \frac{{\tau} s}{2}\|v - \tilde {\balpha}_{ki}^{(t)}\|_2^2 + \gamma R_{\tilde \bS}(v)$ for $v \in \R$ and $R_{\tilde \bS}(v) \triangleq \sum\limits_{j=1}^n {\tilde \bS_{ij} \|v - \balpha_{kj}\|_0  }$. Let $\bv^{*}$ be the optimal solution to (\ref{eq:rl0graph-cdi-subprob}), then the $k$-th element of $\bv^{*}$ is
\begin{small}\begin{align}\label{eq:rl0graph-proximal-step2}
&\bv_k^{*} =
\left\{
\begin{array}
    {r@{\quad:\quad}l}
    \argmin_{v \in \{\tilde {\balpha}_{ki}^{(t)}\} \cup \{\balpha_{kj}\}_{\{j: \tilde \bS_{ij} \neq 0\}} } F_k(v) & { k \neq i} \\
    0 & k = i
\end{array}
\right.
\end{align}\end{small}
\end{MyProposition}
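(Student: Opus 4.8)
The plan is to exploit the fact that the subproblem (\ref{eq:rl0graph-cdi-subprob}) is fully separable across the coordinates of $\bv$, solve each resulting one-dimensional problem, and show that a minimizer of the scalar problem always lies in an explicit finite candidate set.

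First I would rewrite the objective coordinatewise. Since $\|\bv-\tilde {\balpha^i}^{(t)}\|_2^2=\sum_{k=1}^n(\bv_k-\tilde {\balpha}_{ki}^{(t)})^2$ and the $\ell^{0}$-norm counts nonzero entries, so that $\|\bv-\balpha^j\|_0=\sum_{k=1}^n\|\bv_k-\balpha_{kj}\|_0$, the objective of (\ref{eq:rl0graph-cdi-subprob}) equals $\sum_{k=1}^n F_k(\bv_k)$, and the only coupling is the constraint $\bv_i=0$. Hence the problem decouples: $\bv_i^{*}=0$ is forced (the $k=i$ branch of (\ref{eq:rl0graph-proximal-step2})), and for each $k\neq i$ the optimal $\bv_k^{*}$ is a global minimizer of the scalar function $F_k$ over $\R$.

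Second, for fixed $k\neq i$ I would establish that $F_k$ attains its minimum: it is the sum of a strictly convex coercive quadratic in $v$ and the term $\gamma\sum_j\tilde\bS_{ij}\|v-\balpha_{kj}\|_0$, which is nonnegative, bounded, piecewise constant, and lower semicontinuous (each $v\mapsto\|v-\balpha_{kj}\|_0$ attains its infimum $0$ exactly at $v=\balpha_{kj}$), so $F_k$ is coercive and lower semicontinuous. To localize a minimizer, introduce the knot set $A_k=\{\balpha_{kj}:\tilde\bS_{ij}\neq0\}$, its complement $U=\R\setminus A_k$, and the constant $C_k=\sum_{j:\tilde\bS_{ij}\neq0}\tilde\bS_{ij}$; on $U$ every relevant indicator equals $1$, so $F_k(v)=\frac{\tau s}{2}(v-\tilde {\balpha}_{ki}^{(t)})^2+\gamma C_k$ there. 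I then split into two cases. If $\tilde {\balpha}_{ki}^{(t)}\in U$, then $\min_{v\in U}F_k(v)=F_k(\tilde {\balpha}_{ki}^{(t)})$ is attained, and since $\R=U\cup A_k$ the global minimizer lies in $\{\tilde {\balpha}_{ki}^{(t)}\}\cup A_k$. If $\tilde {\balpha}_{ki}^{(t)}\in A_k$, then $F_k(v)>\gamma C_k$ for every $v\in U$ while $F_k(\tilde {\balpha}_{ki}^{(t)})=\gamma\big(C_k-\sum_{j:\tilde\bS_{ij}\neq0,\,\balpha_{kj}=\tilde {\balpha}_{ki}^{(t)}}\tilde\bS_{ij}\big)<\gamma C_k$, so no point of $U$ can be a global minimizer and the minimizer lies in $A_k$. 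In either case the minimizer of $F_k$ belongs to the finite set $\{\tilde {\balpha}_{ki}^{(t)}\}\cup\{\balpha_{kj}\}_{\{j:\tilde\bS_{ij}\neq0\}}$, which is precisely (\ref{eq:rl0graph-proximal-step2}); a finite comparison then selects it.

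The coordinate split and the compactness argument are routine bookkeeping. The one genuinely delicate point is the case where the unconstrained quadratic minimizer $\tilde {\balpha}_{ki}^{(t)}$ happens to coincide with a knot of $R_{\tilde\bS}$: there the infimum of $F_k$ over the smooth region $U$ is not attained, and one has to notice that the downward jump of the step penalty at that knot strictly beats this infimum, which is exactly what keeps $\tilde {\balpha}_{ki}^{(t)}$ in the candidate set. Throughout it suffices to speak of ``a'' global minimizer, since ties among the finitely many candidates may be broken arbitrarily.
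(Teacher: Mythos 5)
Your proof is correct and follows what is essentially the only natural route to this result (and the one the paper takes in its supplementary proof): separate the objective of (\ref{eq:rl0graph-cdi-subprob}) coordinatewise using the additivity of both the squared $\ell^{2}$ term and the $\ell^{0}$ term, force $\bv_i^{*}=0$ from the constraint, and reduce each scalar problem to a finite comparison over the quadratic's vertex $\tilde{\balpha}_{ki}^{(t)}$ and the knots $\{\balpha_{kj}\}_{j:\tilde\bS_{ij}\neq 0}$, since off the knots the penalty is the constant $\gamma C_k$ and the quadratic is minimized at its vertex. You also correctly handle the delicate boundary case where $\tilde{\balpha}_{ki}^{(t)}$ coincides with a knot, observing that the downward jump of the penalty there strictly beats the unattained infimum over the smooth region, so the stated candidate set suffices.
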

Proposition~\ref{proposition::sol-to-subprob} suggests an efficient way of obtaining the solution to (\ref{eq:rl0graph-cdi-subprob}). According to (\ref{eq:rl0graph-proximal-step2}), ${\balpha^i}^{(t)} = \bv^{*}$ can be obtained by searching over a candidate set of size $K+1$, where $K$ is the number of nearest neighbors to construct the KNN graph $\bS$ for regularized $\ell^{0}$-graph.

Similar to Theorem~\ref{theorem::sufficient-decrease}, the sequence $\{F({\balpha^i}^{(t)})\}_t$ is decreasing. The iterative proximal method starts from $t=1$ and continue until the sequence $\{F({\balpha^i}^{(t)})\}_t$ converges or maximum iteration number is achieved. When the proximal method converges or terminates for each $\balpha^i$, the step of coordinate descent for $\balpha^i$ is finished and the optimization algorithm proceeds to optimize other sparse codes. Each iteration of coordinate descent solves (\ref{eq:rl0graph-cdi}) for $i=1 \ldots n$ sequentially, and it terminates when maximum iteration number is reached or converges under some stopping threshold on the change of the objective function (\ref{eq:rl0graph}).

%
%
%
%
%

\begin{table*}[ht]
\centering
\caption{\small Clustering Results on Ionosphere and MNIST Handwritten Digits Database}
\begin{tabular}{|c|c|c|c|c|c|c|c|}
  \hline
  Data Set

                              &Measure & KM    & SC     &$\ell^{1}$-Graph   &SMCE    &OMP-Graph   &$\ell^{0}$-Graph   \\\hline

  \multirow{2}{*}{Ionosphere} &AC      &0.7097 &0.7350  &0.5128             &0.6809  &0.6353      &\textbf{0.7692} \\ \cline{2-8}
                              &NMI     &0.1287 &0.2155  &0.1165             &0.0871  &0.0299      &\textbf{0.2609} \\ \hline

  \multirow{2}{*}{MNIST}      &AC      &0.5621 &0.4922  &0.4948             &0.5784  &0.5754      &\textbf{0.6590} \\ \cline{2-8}
                              &NMI     &0.5113 &0.4755  &0.5210             &0.6332  &0.5463      &\textbf{0.6709} \\ \hline

\end{tabular}
\label{table:uci-mnist}
\end{table*}

\begin{table*}[ht]
\centering
\small
\caption{\small Clustering Results on COIL-20 Database. $c$ in the left column is the cluster number, i.e. the first $c$ clusters of the entire data are used for clustering. $c$ has the same meaning in Table~\ref{table:coil100} and Table ~\ref{table:yaleb}.}
\begin{tabular}{|c|c|c|c|c|c|c|c|}
  \hline
  \begin{tabular}{c}COIL-20 \\ \hline
  \# Clusters \end{tabular}

                           &Measure & KM    & SC     &$\ell^{1}$-Graph   &SMCE    &OMP-Graph   &$\ell^{0}$-Graph   \\\hline

  \multirow{2}{*}{c = 4}   &AC      &0.6632 &0.6701  &1.0000             &0.7639  &0.9271      &\textbf{1.0000} \\ \cline{2-8}
                           &NMI     &0.5106 &0.5455  &1.0000             &0.6741  &0.8397      &\textbf{1.0000} \\ \hline

  \multirow{2}{*}{c = 8}   &AC      &0.5130 &0.4462  &0.7986             &0.5365  &0.6753      &\textbf{0.9705} \\ \cline{2-8}
                           &NMI     &0.5354 &0.4947  &0.8950             &0.6786  &0.7656      &\textbf{0.9638} \\ \hline

  \multirow{2}{*}{c = 12}  &AC      &0.5885 &0.4965  &0.7697             &0.6806  &0.5475      &\textbf{0.8310} \\ \cline{2-8}
                           &NMI     &0.6707 &0.6096  &0.8960             &0.8066  &0.6316      &\textbf{0.9149} \\ \hline

  \multirow{2}{*}{c = 16}  &AC      &0.6579 &0.4271  &0.8273             &0.7622  &0.3481      &\textbf{0.9002} \\ \cline{2-8}
                           &NMI     &0.7555 &0.6031  &0.9301             &0.8730  &0.4520      &\textbf{0.9552} \\ \hline

  \multirow{2}{*}{c = 20}  &AC      &0.6554 &0.4278  &0.7854             &0.7549  &0.3389      &\textbf{0.8472} \\ \cline{2-8}
                           &NMI     &0.7630 &0.6217  &0.9148             &0.8754  &0.4853      &\textbf{0.9428} \\ \hline

\end{tabular}
\label{table:coil20}
\end{table*}
\begin{table*}[ht]
\centering
\small
\caption{\small Clustering Results on COIL-100 Database.}
\begin{tabular}{|c|c|c|c|c|c|c|c|}
  \hline
  \begin{tabular}{c}COIL-100 \\ \hline
  \# Clusters \end{tabular}

                           &Measure & KM    & SC     &$\ell^{1}$-Graph   &SMCE    &OMP-Graph   &$\ell^{0}$-Graph   \\\hline

  \multirow{2}{*}{c = 20}  &AC      &0.5850 &0.4514  &0.5757             &0.6208  &0.4243      &\textbf{0.9264} \\ \cline{2-8}
                           &NMI     &0.7456 &0.6700  &0.7980             &0.7993  &0.5258      &\textbf{0.9681} \\ \hline

  \multirow{2}{*}{c = 40}  &AC      &0.5791 &0.4139  &0.5934             &0.6038  &0.2340      &\textbf{0.8472} \\ \cline{2-8}
                           &NMI     &0.7691 &0.6681  &0.7962             &0.7918  &0.4378      &\textbf{0.9471} \\ \hline

  \multirow{2}{*}{c = 60}  &AC      &0.5371 &0.3389  &0.5657             &0.5887  &0.1905      &\textbf{0.8326} \\ \cline{2-8}
                           &NMI     &0.7622 &0.6343  &0.8162             &0.7973  &0.3690      &\textbf{0.9352} \\ \hline

  \multirow{2}{*}{c = 80}  &AC      &0.5048 &0.3115  &0.5271             &0.5835  &0.2247      &\textbf{0.7899} \\ \cline{2-8}
                           &NMI     &0.7474 &0.6088  &0.8006             &0.8006  &0.4173      &\textbf{0.9218} \\ \hline

  \multirow{2}{*}{c = 100} &AC      &0.4996 &0.2835  &0.5275             &0.5639  &0.1667      &\textbf{0.7683} \\ \cline{2-8}
                           &NMI     &0.7539 &0.5923  &0.8041             &0.8064  &0.3757      &\textbf{0.9182} \\ \hline

\end{tabular}
\label{table:coil100}
\end{table*}
\begin{table*}[ht]
\centering
\small
\caption{\small Clustering Results on the Extended Yale Face Database B.}
\begin{tabular}{|c|c|c|c|c|c|c|c|}
  \hline
  \begin{tabular}{c}Yale-B \\ \hline
  \# Clusters \end{tabular}

                           &Measure & KM    & SC     &$\ell^{1}$-Graph   &SMCE    &OMP-Graph   &$\ell^{0}$-Graph   \\\hline

  \multirow{2}{*}{c = 10}  &AC      &0.1782 &0.1922  &0.7580             &0.3672  &0.7375      &\textbf{0.8406} \\ \cline{2-8}
                           &NMI     &0.0897 &0.1310  &0.7380             &0.3266  &0.7468      &\textbf{0.7695} \\ \hline

  \multirow{2}{*}{c = 15}  &AC      &0.1554 &0.1706  &0.7620             &0.3761  &0.7532      &\textbf{0.7987} \\ \cline{2-8}
                           &NMI     &0.1083 &0.1390  &0.7590             &0.3593  &0.7943      &\textbf{0.8183} \\ \hline

  \multirow{2}{*}{c = 20}  &AC      &0.1200 &0.1466  &0.7930             &0.3526  &0.7813      &\textbf{0.8273} \\ \cline{2-8}
                           &NMI     &0.0872 &0.1183  &0.7860             &0.3771  &0.8172      &\textbf{0.8429} \\ \hline

  \multirow{2}{*}{c = 30}  &AC      &0.1096 &0.1209  &0.8210             &0.3470  &0.7156      &\textbf{0.8633} \\ \cline{2-8}
                           &NMI     &0.1159 &0.1338  &0.8030             &0.3927  &0.7260      &\textbf{0.8762} \\ \hline

  \multirow{2}{*}{c = 38}  &AC      &0.0954 &0.1077  &0.7850             &0.3293  &0.6529      &\textbf{0.8480} \\ \cline{2-8}
                           &NMI     &0.1258 &0.1485  &0.7760             &0.3812  &0.7024      &\textbf{0.8612} \\ \hline

\end{tabular}
\label{table:yaleb}
\end{table*}

\begin{figure*}[!htb]
\begin{center}
\includegraphics[width=0.46\textwidth]{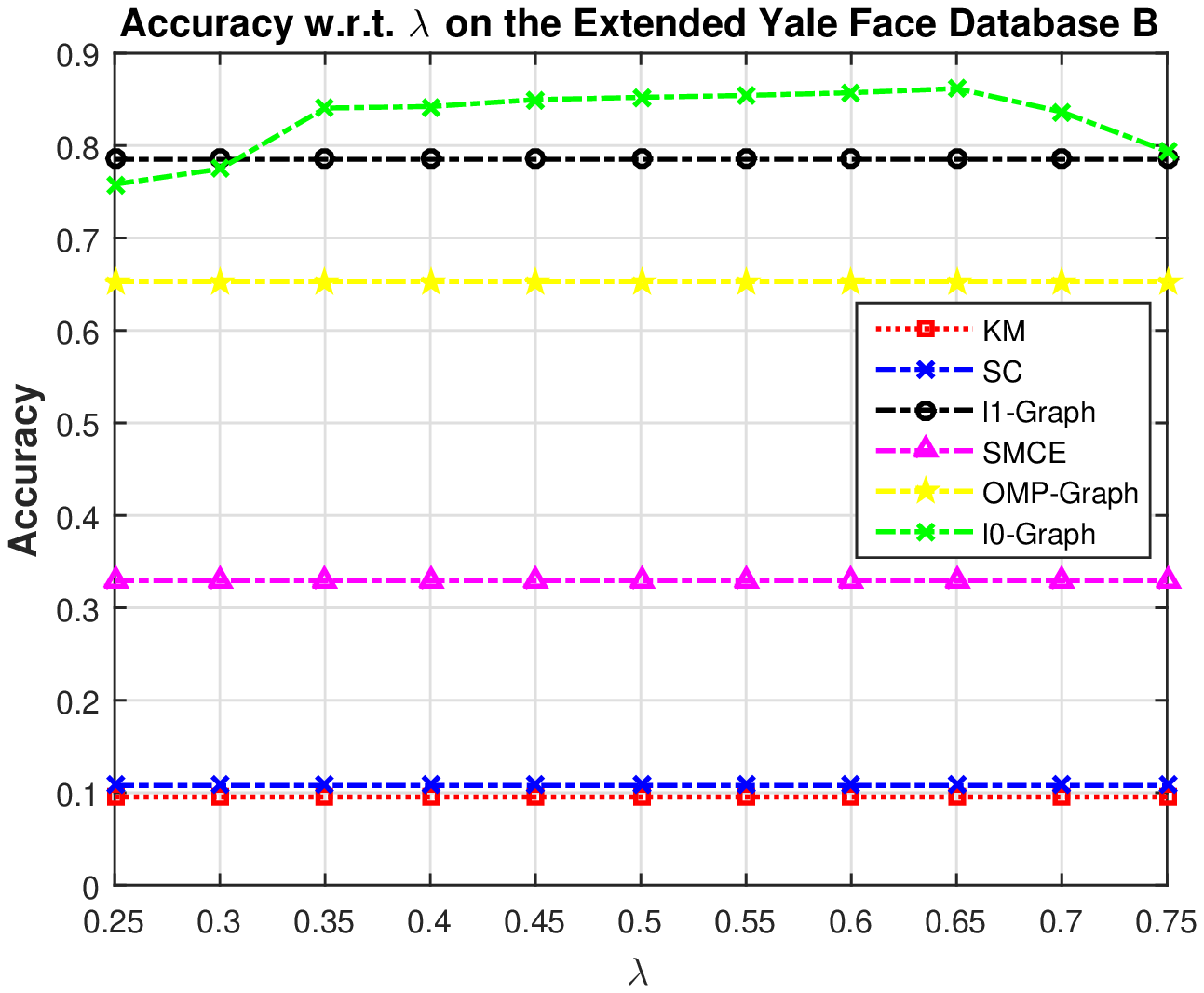}
\includegraphics[width=0.46\textwidth]{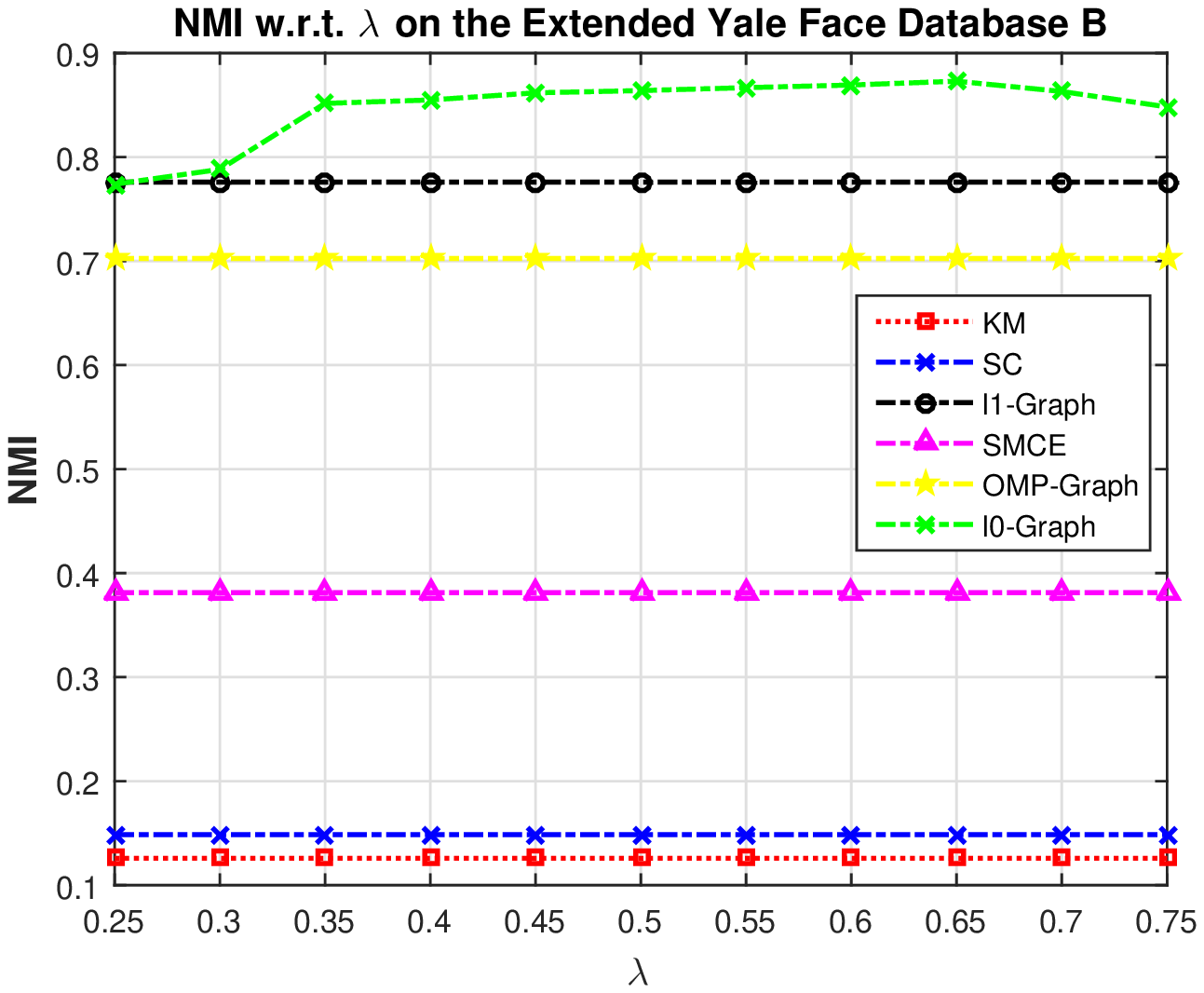}
\end{center}
   \caption{Clustering performance with different values of $\lambda$, i.e. the weight for the $\ell^{0}$-norm, on the Extended Yale Face Database B. Left: Accuracy; Right: NMI}
\label{fig:yaleb-lambda}
\end{figure*}

\begin{figure*}[!htb]
\begin{center}
\includegraphics[width=0.46\textwidth]{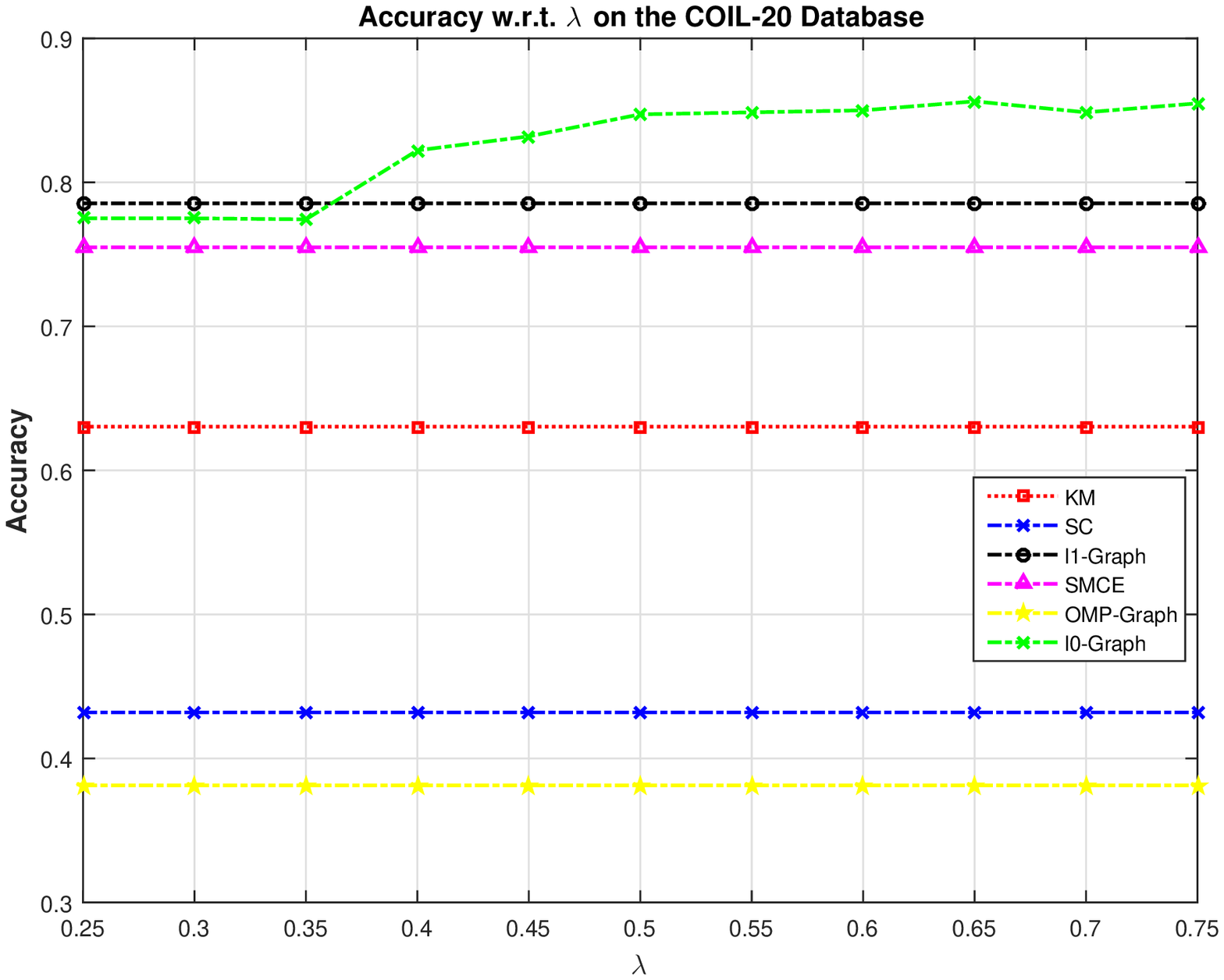}
\includegraphics[width=0.46\textwidth]{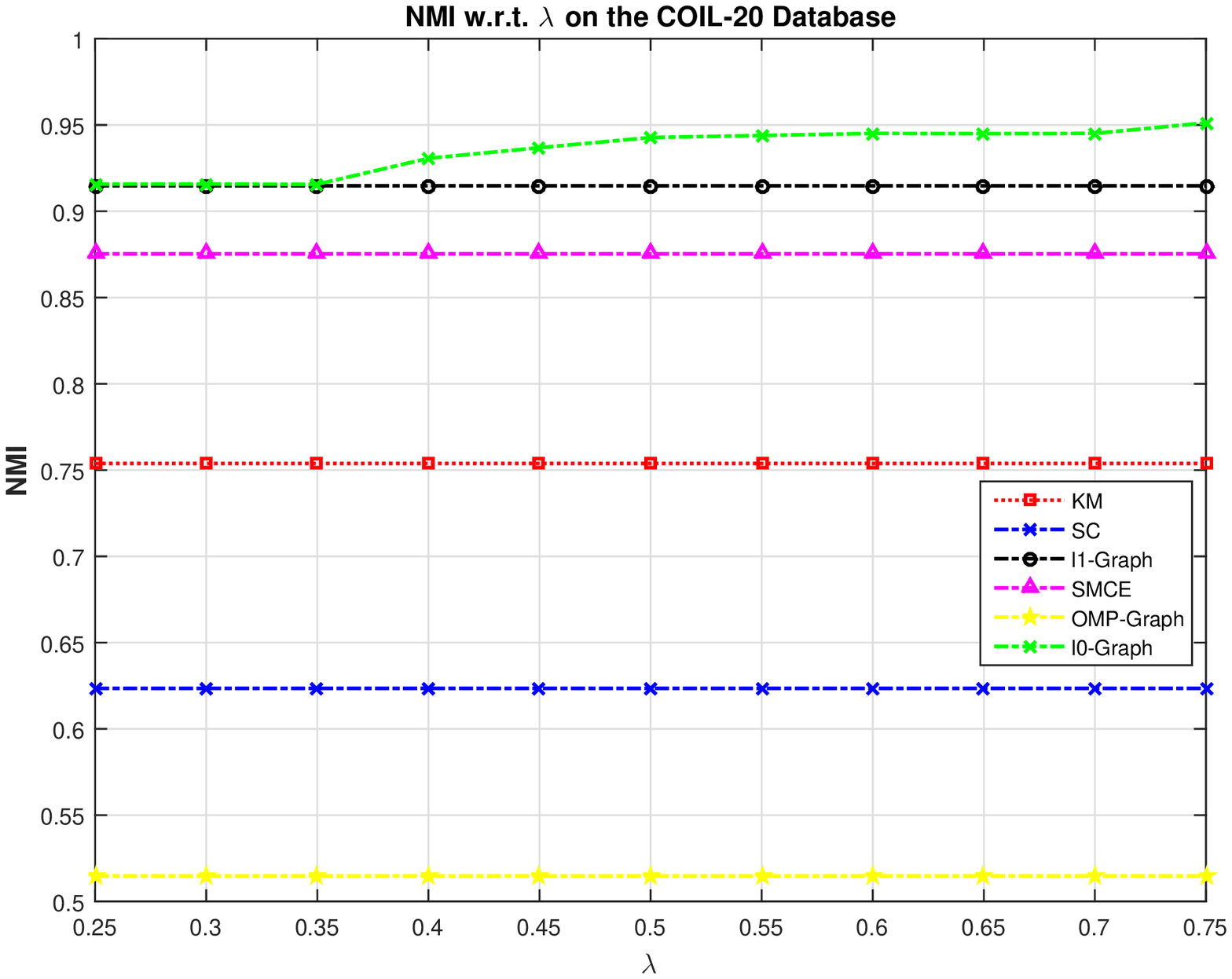}
\end{center}
   \caption{Clustering performance with different values of $\lambda$, i.e. the weight for the $\ell^{0}$-norm, on the COIL-20 Database. Left: Accuracy; Right: NMI}
\label{fig:coil20-lambda}
\end{figure*}

\section{Experimental Results}
The superior clustering performance of $\ell^{0}$-graph is demonstrated in this section with extensive experimental results, and we also show the effectiveness of regularized $\ell^{0}$-graph.  We compare our $\ell^{0}$-graph to K-means (KM), Spectral Clustering (SC), $\ell^{1}$-graph, Sparse Manifold Clustering and Embedding (SMCE) \cite{ElhamifarV11}. Moreover, we derive the OMP-graph, which builds the sparse graph in the same way as $\ell^{0}$-graph except that it solves the following optimization problem by Orthogonal Matching Pursuit (OMP) to obtain the sparse code:
\begin{small}\begin{align}\label{eq:ompgraph}
\mathop {\min }\limits_{{\balpha^i}} \|\bx_i - \bmX \balpha^{i}\|_F^2  \quad s.t. \,\, \|{\balpha^{i}}\|_0 \le T, {\balpha}_i^i = 0, \,\, i=1,\ldots,n
\end{align}\end{small}
$\ell^{0}$-graph is also compared to OMP-graph to show the advantage of the proposed proximal method in the previous sections. By adjusting the parameters, $\ell^{1}$-graph and SSC solve the same problem and generate equivalent results, so we report their performance under the same name ``$\ell^{1}$-graph''.
\subsection{Evaluation Metric}
Two measures are used to evaluate the performance of the clustering
methods, i.e. the accuracy and the Normalized Mutual Information(NMI) \cite{Zheng04}. Let the predicted label of the datum $\bx_i$ be $\hat y_i$ which is produced by the clustering method, and $y_i$ is its ground truth label. The accuracy is defined as
\begin{align}\label{eq:accuracy}
&{Accuracy} = \frac{\1_{{\Omega}(\hat y_i) \ne y_i}}{n}
\end{align}
\noindent where $\1$ is the indicator function, and $\Omega$ is the best permutation mapping
function by the Kuhn-Munkres algorithm \cite{plummer1986}. The more predicted labels match the ground truth ones, the more accuracy value is obtained.

Let $\hat X$ be the index set obtained from the predicted labels $\{\hat y_i\}_{i=1}^n$ and $X$ be the index set from the ground truth labels $\{y_i\}_{i=1}^n$. The mutual information between ${\hat X}$ and $X$ is
\begin{small}\begin{align}\label{eq:MI}
MI( {\hat X,X}) = \sum\limits_{\hat x \in \hat X,x \in X} {p( {\hat x,x} ){{\log }_2}( {\frac{{p( {\hat x,x} )}}{{p( {\hat x} )p( x )}}} )}
\end{align}\end{small}
\noindent where $p(\hat x)$ and $p(x)$ are the margined distribution of $\hat X$ and $X$ respectively, induced from the joint distribution $p(\hat x, x)$ over $\hat X$ and $X$. Let $H( {\hat X} )$ and $H( X )$ be the entropy of $\hat X$ and $X$, then the normalized mutual information (NMI) is defined as below:
\begin{small}\begin{align}\label{eq:NMI}
&NMI( {\hat X,X} ) = \frac{{MI( {\hat X,X} )}}{{\max \{ {H( {\hat X} ),H( X )}\}}}
\end{align}\end{small}
It can be verified that the normalized mutual information takes values in $[0,1]$. The accuracy and the normalized mutual information have been widely used for evaluating the performance of the clustering methods \cite{Zheng11,ChengYYFH10,Zheng04}.

\subsection{Clustering on UCI Data Set and MNIST Handwritten Digits Database}
In this subsection, we conduct experiments on the Ionosphere data from UCI machine learning repository \cite{Asuncion07} and the MNIST database of handwritten digits. The information of these two data sets are in Table~\ref{table:datasets}. MNIST handwritten digits database has a total number of $70000$ samples for digits from $0$ to $9$. The digits are normalized and centered in a fixed-size image. For MNIST data set, we randomly select $500$ samples for each digit to obtain a subset of MNIST data consisting of $5000$ samples. The random sampling is performed for $10$ times and the average clustering performance is recorded. The clustering results on the two data sets are shown in Table~\ref{table:uci-mnist}.
\begin{table}[h]
\centering
\small
\caption{Two UCI data sets and MNIST Handwritten Digits Database in the experiments}
\begin{tabular}{|c|c|c|c|c|c|c|c|c|c|c|}
  \hline
                       &Heart &Ionosphere  &MNIST                \\\hline
  \# of instances      &270  &351          &70000                         \\ \hline
  Dimension            &13   &34           &1024                         \\ \hline
  \# of classes        &2    &2            &10                    \\ \hline
\end{tabular}
\label{table:datasets}
\end{table}

\subsection{Clustering On COIL-20 and COIL-100 Database}
COIL-20 Database has $1440$ images of $20$ objects in which the background has been removed, and the size of each image is $32 \times 32$, so the dimension of this data is $1024$. COIL-100 Database contains $100$ objects with $72$ images of size $32 \times 32$ for each object. The images of each object were taken $5$ degrees apart when the object was rotated on a turntable. The clustering results on these two data sets are shown in Table~\ref{table:coil20} and Table~\ref{table:coil100} respectively. We observe that $\ell^{0}$-graph performs consistently better than all other competing methods. On COIL-100 Database, SMCE renders slightly better results than $\ell^{1}$-graph on the entire data due to its capability of modeling non-linear manifolds.

\subsection{Clustering On Extended Yale Face Database B}
The Extended Yale Face Database B contains face images for $38$ subjects with $64$ frontal face images taken under different illuminations for each subject. The clustering results are shown in Table~\ref{table:yaleb}. We can see that $\ell^{0}$-graph achieves significantly better clustering result than $\ell^{1}$-graph, which is the second best method on this data.

\subsection{Improved $\ell^{0}$-Graph with Regularization}
In this subsection, we investigate the performance of regularized $\ell^{0}$-graph. We empirically set $\bS$ to be the the adjacency matrix of $5$-NN graph and $\gamma = 0.1$ as the default parameter setting for regularized $\ell^{0}$-graph in (\ref{eq:rl0graph}). We conduct comparison experiments on the UCI Heart data whose information is in Table~\ref{table:uci-mnist}, the Extended Yale Face Database B and UMIST Face Database. The UMIST Face Database consists of $575$ images of size $112 \times 92$ for $20$ people. Each person is shown in a range of poses from profile to frontal views. The clustering results are shown in Table~\ref{table:rl0graph}. The better results of regularized $\ell^{0}$-graph are due to the fact that it promotes common neighbors for nearby data so as to produce a more aligned similarity graph and alleviate the graph connectivity issue.
\begin{table}[ht]
\centering
\caption{\small Clustering Performance of Regularized $\ell^{0}$-Graph}
\begin{tabular}{|c|c|c|c|c|c|c|c|}
  \hline
  Data Set

                                        &Measure &$\ell^{0}$-Graph &R$\ell^{0}$-Graph   \\\hline

  \multirow{2}{*}{Heart}                &AC      &0.5111 &\textbf{0.6444}  \\ \cline{2-4}
                                        &NMI     &0.0064 &\textbf{0.0590}  \\ \hline

  \multirow{2}{*}{Extended Yale B}      &AC      &0.8480 &\textbf{0.8521}  \\ \cline{2-4}
                                        &NMI     &0.8612 &\textbf{0.8634} \\ \hline

  \multirow{2}{*}{UMIST Face}           &AC      &0.6730 &\textbf{0.7078}  \\ \cline{2-4}
                                        &NMI     &0.7924 &\textbf{0.8153} \\ \hline
\end{tabular}
\label{table:rl0graph}
\end{table}

\subsection{Parameter Setting}
We use the sparse codes generated by $\ell^{1}$-graph with the weighting parameter $\lambda_{\ell^{1}} = 0.1$ in (\ref{eq:ssc-l1-lasso}), which is the default value suggested in \cite{ElhamifarV13}, to initialize $\ell^{0}$-graph, and set $\lambda=0.5$ for $\ell^{0}$-graph empirically throughout all the experiments in this section. We observe that the average number of non-zero elements of the sparse code for each data point is around $3$ for most data sets. The maximum iteration number $M = 100$ and the stopping threshold $\varepsilon = 10^{-6}$. For OMP-graph, we tune the parameter $T$ in (\ref{eq:ompgraph}) to control the sparsity of the generated sparse codes such that the aforementioned average number of non-zero elements of the sparse code matches that of $\ell^{0}$-graph. For $\ell^{1}$-graph, the weighting parameter for the $\ell^{1}$-norm is chosen from $[0.1,1]$ for the best performance.

We investigate how the clustering performance on the Extended Yale Face Database B and COIL-20 Database changes by varying the weighting parameter $\lambda$ for $\ell^{0}$-graph, and illustrate the result in Figure~\ref{fig:yaleb-lambda} and Figure~\ref{fig:coil20-lambda} respectively. We observe that the performance of $\ell^{0}$-graph is much better than other algorithms over a relatively large range of $\lambda$, revealing the robustness of our algorithm with respect to the weighting parameter $\lambda$.

\subsection{Efficient Parallel Computing by CUDA Implementation}
We have implemented $\ell^{0}$-graph, regularized $\ell^{0}$-graph in CUDA C programming language on NVIDIA K$40$. Both the MATLAB and CUDA implementation will be available for downloading. We compare the running time of $\ell^{0}$-graph in MATLAB implementation and CUDA C implementation on the Extended Yale Face Database B data, on a workstation with $2$ Intel Xeon X$5650$ $2.67$ GHz CPU, $48$ GB memory and one NVIDIA K$40$ graphics card. MATLAB implementation takes $48.51$ seconds while the CUDA implementation only takes $1.68$ seconds, with a speedup of $28.87$ times.

Due to the limited space, we have put additional experimental results in the supplementary document for this paper, such as the application of $\ell^{0}$-graph on semi-supervised learning, and the parameter sensitivity for regularized $\ell^{0}$-graph.

\section{Conclusion}
We propose a novel $\ell^{0}$-graph for data clustering in this paper. In contrast to the existing sparse subspace clustering method such as Sparse Subspace Clustering and $\ell^{1}$-graph, $\ell^{0}$-graph features $\ell^{0}$-induced almost surely subspace-sparse representation under milder assumptions on the subspaces and random data generation. The objective function of $\ell^{0}$-graph is optimized using a proposed proximal method. Convergence of this proximal method is proved, and extensive experimental results on various real data sets demonstrate the effectiveness and superiority of $\ell^{0}$-graph over other competing methods. To improve the graph connectivity, we propose regularized $\ell^{0}$-graph whose effectiveness is also demonstrated on real data sets.


{\small
\bibliographystyle{ieee}
\bibliography{egbib}

\begin{thebibliography}{10}\itemsep=-1pt

\bibitem{Asuncion07}
D.~N. A.~Asuncion.
\newblock {UCI} machine learning repository, 2007.

\bibitem{BaoJQS14}
C.~Bao, H.~Ji, Y.~Quan, and Z.~Shen.
\newblock {L0} norm based dictionary learning by proximal methods with global
  convergence.
\newblock In {\em 2014 {IEEE} Conference on Computer Vision and Pattern
  Recognition, {CVPR} 2014, Columbus, OH, USA, June 23-28, 2014}, pages
  3858--3865, 2014.

\bibitem{BoltePAL2014}
J.~Bolte, S.~Sabach, and M.~Teboulle.
\newblock Proximal alternating linearized minimization for nonconvex and
  nonsmooth problems.
\newblock {\em Math. Program.}, 146(1-2):459--494, Aug. 2014.

\bibitem{ChengYYFH10}
B.~Cheng, J.~Yang, S.~Yan, Y.~Fu, and T.~S. Huang.
\newblock Learning with l1-graph for image analysis.
\newblock {\em IEEE Transactions on Image Processing}, 19(4):858--866, 2010.

\bibitem{ChengSRL2013}
H.~Cheng, Z.~Liu, L.~Yang, and X.~Chen.
\newblock Sparse representation and learning in visual recognition: Theory and
  applications.
\newblock {\em Signal Process.}, 93(6):1408--1425, June 2013.

\bibitem{Dyer13a}
E.~L. Dyer, A.~C. Sankaranarayanan, and R.~G. Baraniuk.
\newblock Greedy feature selection for subspace clustering.
\newblock {\em Journal of Machine Learning Research}, 14:2487--2517, 2013.

\bibitem{ElhamifarV11}
E.~Elhamifar and R.~Vidal.
\newblock Sparse manifold clustering and embedding.
\newblock In {\em NIPS}, pages 55--63, 2011.

\bibitem{ElhamifarV13}
E.~Elhamifar and R.~Vidal.
\newblock Sparse subspace clustering: Algorithm, theory, and applications.
\newblock {\em {IEEE} Trans. Pattern Anal. Mach. Intell.}, 35(11):2765--2781,
  2013.

\bibitem{Fraley02}
C.~Fraley and A.~E. Raftery.
\newblock {Model-Based Clustering, Discriminant Analysis, and Density
  Estimation}.
\newblock {\em Journal of the American Statistical Association},
  97(458):611--631, June 2002.

\bibitem{Hyder09}
M.~Hyder and K.~Mahata.
\newblock An approximate l0 norm minimization algorithm for compressed sensing.
\newblock In {\em Acoustics, Speech and Signal Processing, 2009. ICASSP 2009.
  IEEE International Conference on}, pages 3365--3368, April 2009.

\bibitem{jenatton2010proximal}
R.~Jenatton, J.~Mairal, F.~R. Bach, and G.~R. Obozinski.
\newblock Proximal methods for sparse hierarchical dictionary learning.
\newblock In {\em Proceedings of the 27th International Conference on Machine
  Learning (ICML-10)}, pages 487--494, 2010.

\bibitem{Liu12}
G.~Liu, Z.~Lin, S.~Yan, J.~Sun, Y.~Yu, and Y.~Ma.
\newblock Robust recovery of subspace structures by low-rank representation.
\newblock {\em IEEE Trans. Pattern Anal. Mach. Intell.}, 35(1):171--184, Jan.
  2013.

\bibitem{LiuLY10}
G.~Liu, Z.~Lin, and Y.~Yu.
\newblock Robust subspace segmentation by low-rank representation.
\newblock In {\em Proceedings of the 27th International Conference on Machine
  Learning (ICML-10), June 21-24, 2010, Haifa, Israel}, pages 663--670, 2010.

\bibitem{Mairal2010}
J.~Mairal, F.~Bach, J.~Ponce, and G.~Sapiro.
\newblock Online learning for matrix factorization and sparse coding.
\newblock {\em J. Mach. Learn. Res.}, 11:19--60, Mar. 2010.

\bibitem{MairalBPSZ08}
J.~Mairal, F.~R. Bach, J.~Ponce, G.~Sapiro, and A.~Zisserman.
\newblock Supervised dictionary learning.
\newblock In {\em Advances in Neural Information Processing Systems 21,
  Proceedings of the Twenty-Second Annual Conference on Neural Information
  Processing Systems, Vancouver, British Columbia, Canada, December 8-11,
  2008}, pages 1033--1040, 2008.

\bibitem{Mancera2006}
L.~Mancera and J.~Portilla.
\newblock L0-norm-based sparse representation through alternate projections.
\newblock In {\em Image Processing, 2006 IEEE International Conference on},
  pages 2089--2092, Oct 2006.

\bibitem{Nasihatkon11}
B.~Nasihatkon and R.~Hartley.
\newblock Graph connectivity in sparse subspace clustering.
\newblock In {\em Computer Vision and Pattern Recognition (CVPR), 2011 IEEE
  Conference on}, pages 2137--2144, June 2011.

\bibitem{Ng01}
A.~Y. Ng, M.~I. Jordan, and Y.~Weiss.
\newblock On spectral clustering: Analysis and an algorithm.
\newblock In {\em NIPS}, pages 849--856, 2001.

\bibitem{ParkCS14}
D.~Park, C.~Caramanis, and S.~Sanghavi.
\newblock Greedy subspace clustering.
\newblock In {\em Advances in Neural Information Processing Systems 27: Annual
  Conference on Neural Information Processing Systems 2014, December 8-13 2014,
  Montreal, Quebec, Canada}, pages 2753--2761, 2014.

\bibitem{Peng2015robust}
X.~Peng, Z.~Yi, and H.~Tang.
\newblock Robust subspace clustering via thresholding ridge regression.
\newblock In {\em AAAI Conference on Artificial Intelligence (AAAI)}, pages
  3827--3833. AAAI, 2015.

\bibitem{plummer1986}
D.~Plummer and L.~Lov{\'a}sz.
\newblock {\em Matching Theory}.
\newblock North-Holland Mathematics Studies. Elsevier Science, 1986.

\bibitem{Soltanolkotabi2012}
M.~Soltanolkotabi and E.~J. Candés.
\newblock A geometric analysis of subspace clustering with outliers.
\newblock {\em Ann. Statist.}, 40(4):2195--2238, 08 2012.

\bibitem{Tropp04}
J.~A. Tropp.
\newblock Greed is good: algorithmic results for sparse approximation.
\newblock {\em {IEEE} Transactions on Information Theory}, 50(10):2231--2242,
  2004.

\bibitem{Vidal11}
R.~Vidal.
\newblock Subspace clustering.
\newblock {\em Signal Processing Magazine, IEEE}, 28(2):52--68, March 2011.

\bibitem{Wang13}
Y.-X. Wang, H.~Xu, and C.~Leng.
\newblock Provable subspace clustering: When lrr meets ssc.
\newblock In C.~Burges, L.~Bottou, M.~Welling, Z.~Ghahramani, and
  K.~Weinberger, editors, {\em Advances in Neural Information Processing
  Systems 26}, pages 64--72. Curran Associates, Inc., 2013.

\bibitem{YanW09}
S.~Yan and H.~Wang.
\newblock Semi-supervised learning by sparse representation.
\newblock In {\em SDM}, pages 792--801, 2009.

\bibitem{YangYGH09}
J.~Yang, K.~Yu, Y.~Gong, and T.~S. Huang.
\newblock Linear spatial pyramid matching using sparse coding for image
  classification.
\newblock In {\em CVPR}, pages 1794--1801, 2009.

\bibitem{YYZRl1graphBMVC2014}
Y.~Yang, Z.~Wang, J.~Yang, J.~Han, and T.~Huang.
\newblock Regularized l1-graph for data clustering.
\newblock In {\em Proceedings of the British Machine Vision Conference}. BMVA
  Press, 2014.

\bibitem{ZhangGLXA13}
T.~Zhang, B.~Ghanem, S.~Liu, C.~Xu, and N.~Ahuja.
\newblock Low-rank sparse coding for image classification.
\newblock In {\em {IEEE} International Conference on Computer Vision, {ICCV}
  2013, Sydney, Australia, December 1-8, 2013}, pages 281--288, 2013.

\bibitem{Zheng11}
M.~Zheng, J.~Bu, C.~Chen, C.~Wang, L.~Zhang, G.~Qiu, and D.~Cai.
\newblock Graph regularized sparse coding for image representation.
\newblock {\em IEEE Transactions on Image Processing}, 20(5):1327--1336, 2011.

\bibitem{Zheng04}
X.~Zheng, D.~Cai, X.~He, W.-Y. Ma, and X.~Lin.
\newblock Locality preserving clustering for image database.
\newblock In {\em Proceedings of the 12th Annual ACM International Conference
  on Multimedia}, MULTIMEDIA '04, pages 885--891, New York, NY, USA, 2004. ACM.

\end{thebibliography}
}

\end{document}